\newtheorem{theorem}{Theorem}
\newtheorem{prop}[theorem]{Proposition}
\newtheorem{definition}[theorem]{Definition}
\newtheorem{remark}[theorem]{Remark}
\title{\bf Finding Probably Approximate Optimal  Solutions by Training to Estimate the Optimal Values of Subproblems}
\author[*]{Nimrod Megiddo}
\author[**]{Segev Wasserkrug}
\author[**]{Orit Davidovich}
\author[***]{Simrit Shtern}
\affil[*]{IBM Research -- Almaden, San Jose, CA, USA}
\affil[**]{IBM Research -- Israel, Haifa, Israel}
\affil[***]{Faculty of Data and Decision Sciences\\
Technion, Haifa, Israel}
\date{October 28, 2025}
\newcommand{\ba}{\mbox{\boldmath $a$}}
\newcommand{\ssB}{{\cal B}}
\newcommand{\bc}{\mbox{\boldmath $c$}}
\newcommand{\be}{\mbox{\boldmath $e$}}
\newcommand{\besmall}{\mbox{\boldmath $\scriptstyle e$}}
\newcommand{\ssF}{{\cal F}}
\newcommand{\ssI}{{\cal I}}
\newcommand{\bi}{\mbox{\boldmath $i$}}
\newcommand{\bismall}{\mbox{\boldmath $\scriptstyle i$}}
\newcommand{\ssP}{{\cal P}}
\newcommand{\bR}{\mbox{\boldmath $R$}}
\newcommand{\bx}{\mbox{\boldmath $x$}}
\newcommand{\bw}{\mbox{\boldmath $w$}}
\newcommand{\btheta}{\mbox{\boldmath $\theta$}}
\newcommand{\bthetasmall}{\mbox{\boldmath $\scriptstyle \theta$}}
\newcommand{\bxi}{\mbox{\boldmath $\xi$}}
\newcommand{\bxismall}{\mbox{\boldmath $\scriptstyle \xi$}}
\newcommand{\betta}{\mbox{\boldmath $\eta$}}
\newcommand{\bettasmall}{\mbox{\boldmath $\scriptstyle \eta$}}
\newcommand{\bzero}{\mbox{\boldmath $0$}}
\newcommand{\bul}{\mbox{\small $\bullet$}}
\newcommand{\EE}{\mbox{\bf E}}
\newcommand{\bbN}{\mathbb{N}}
\newcommand{\bbR}{\mathbb{R}}
\newcommand{\sgn}{\mathop{\rm sgn}}
\def\mAth{\mathsurround=0pt}
\def\eqalign#1{\,\vcenter{\openup1\jot \mAth
\ialign{\strut\hfil$\displaystyle{##}$&$\displaystyle{{}##}$\hfil
    \crcr#1\crcr}}\,}
\newcommand{\ist}{{\rm({\it i}\,{\rm )}}}
\newcommand{\ind}{{\rm({\it ii}\,{\rm )}}}
\newcommand{\itemi}{\item[\ist]}
\newcommand{\itemii}{\item[\ind]}
\newcommand{\eq}{\begin{equation}}
\newcommand{\eeq}{\end{equation}}
\begin{document}
\maketitle
\begin{abstract}
The paper is about developing a solver for maximizing a real-valued function of binary variables.  
The solver relies on an algorithm that estimates the optimal objective-function value of instances from the underlying distribution of objectives and their respective sub-instances. 
The training of the estimator 
is based on an inequality that facilitates the use of the expected total deviation from optimality conditions as a loss function rather than the objective-function itself.
Thus, it does not calculate values of policies, nor does it rely on solved instances.
\end{abstract}

\section{\hskip -16pt. Introduction}
We consider a 
{\em discrete} (or {\em combinatorial}\/) {\em optimization problem} $\ssP$ as an enumerable collection 
$\ssI = \bigcup_{n=1}^\infty \ssI_n$ of {\em problem instances}, where $\ssI_n$ denotes the sub-collection of $\ssI$ of instances of dimension $n$, i.e., with $n$ discrete decision variables. For simplicity, we assume throughout that all the decision variables are binary. 
% Denote $B=\{0,1\}$
We assume a probability distribution
$p:\ssI \to [0,1]$ over $\ssI$,
i.e., 
$\sum_{\bismall\in\ssI} p(\bi)=1$
and $p(\bi)\ge 0$ for every $i\in \ssI$. 

In combinatorial optimization, an instance of dimension $n$ consists
of a feasible domain 
$F\subseteq B^n$ and an objective-function $\phi:F \to \bbR$. 
This problem instance calls for finding an $\bx \in F$ that maximizes $\phi(\bx)$.  
For our purposes here, it is easier to reformulate the problem as unconstrained. This can be done with the use of so-called artificial variables. 

In recent years, many papers have proposed to generate solutions of multi-variable problems by training 
neural networks to fix the values of the variables, sequentially, one at a time. 
See, for example, \cite{mnih2015human, kool2018attention, bresson2021transformer}.
The use of machine learning techniques for value approximations were first suggested in \cite{zhang1995value}.
A survey of machine learning for combinatorial optimization is given in \cite{bengio2021machine}.

In the process of sequential fixing, the network attempts to predict the best choice of value for every sequence of possible previous choices.
The method we propose here is different in the sense that it does not train to predict the best decision. Instead, it attempts to accurately estimate the optimal values of residual sub-problems arising during the fixing process, and the choices of variable values are simply driven by these estimates. 
The key point is that the training to estimate the optimal value does not require any previously-solved instances.
Instead, the training amounts to
iteratively minimizing the deviation from the conditions that characterize an optimal value function.  We prove an inequality that establishes a relation between the proximity to the true optimal value and the amount of deviation from the optimality conditions.
It is important to note that in this approach, the depth of network does not depend on the number of decision variables.

The hypothesis underlying this paper is as follows.  
For some problems $\ssP$, for some distributions $p$ over instances of $\ssP$, there exist neural-network models that can, with high probability under $p$, predict the optimal value of an instance of $\bi\in\ssP$ approximately.

A neural-network model typically has a large number of parameters that need to be tuned before the model reaches an acceptable level of performance.  The obvious idea of tuning the model using instances
with known optimal objective-function value is not practical.
We examine here a different idea, which does not use any optimal values.  Instead we rely on the discrepancy of the model predictions with respect to an optimality condition and the tuning is tantamount to minimizing that discrepancy.

\section{\hskip -16pt. Maximizing real-valued functions of binary  variables}
\begin{definition}\rm
A real-valued\footnote{For computation, we assume the function values are rational numbers.} function of $n$ binary variables
is a mapping
$ f:\{0,1\}^n \to \bbR$.
Given such a function $f$, 
the problem is to find
$x_1,\ldots,x_n \in \{0,1\}$ that maximize $f(x_1,\ldots,x_n)$.
\end{definition}

We assume a probability distribution over a denumerable set $\ssF$ of such functions $f$.

For $n\in\bbN$ and $k=1,\ldots,n$, denote
\[ \ssB_{n,k} = \big\{
\bxi\in \{0,1\}^n ~:~\xi_1=\cdots=
\xi_k=0\big\}~.\]
Also, for $\ell=1,\ldots,k$ and
$\betta\in\ssB_{n,k}$, denote
\[ \Xi_\ell(\betta) 
= \Xi_\ell(\betta;k)
= \big\{  \bxi \in \ssB_{n,\ell}:
\xi_j = \eta_j,~j=k+1,\ldots,n \big\} ~.\]
For every natural $n$, for $k=0,\ldots,n$, and vector $\bxi \in \ssB_{n,k}$,
let $V^*_k(\bxi\,;\,f)$ denote the maximum
of $f(x_1,\ldots,x_k,\,\xi_{k+1},\ldots,\xi_n)$ over $(x_1,\ldots,x_k)\in \{0,1\}^k$.  
Thus, $V^*_n(\bxi\,;\,f)=V^*_n(\bzero;\,f)$
is equal to the maximum of $f$.
By definition,
\[ V^*_0(\bxi\,;\,f) = f(\bxi)~,\]
and for $k=1,\ldots,n$,
\[
V^*_k(\bxi\,;\,f)
~=~
\max\big\{
V^*_{k-1}(\bxi\,;\,f)~,~
V^*_{k-1}(\bxi+\be^k;\,f)
\big\}~,
\]
where $\be^k\in \{0,1\}^n$ denotes the unit vector with $1$ in the $k$th position.
Let $V_k(\bxi\,;\,f)$ be any function whose domain is the domain of 
$V^*_k$, and such that
for every $\bxi\in \{0,1\}^n$,
$V_0(\bxi\,;\,f)= f(\bxi)$.
We define
\[
\Phi(V,V^*) = \EE_{f\in\ssF}
\bigg[ 
\sum_{\bxismall\in B^n}
\big|V^*_n(\bxi;\,f) - 
  V_n(\bxi;\,f)\big|
\bigg]  
\]
and
\[
% \eqalign{
\Psi % & 
(V) = \EE_{f\in\ssF}\bigg[
\sum_{k=1}^n \sum_{\bxismall \in B_{n,k}}  
% \cr & \ 
\big| 
\max
\big\{
V_{k-1}(\bxi\,;\,f) ~,~ 
V_{k-1}(\bxi+\be^k;\,f)
\big\} 
 - V_{k}(\bxi\,;\,f)
\big|~
\bigg]~.
%}
\]
Denote
\eq \label{eqqn:3}
%\eqalign{
\delta %&
(k\,;f,\,\bxi) = % \cr&
\max\big\{
V_{k-1}(\bxi\,;\,f) ~,~ 
V_{k-1}(\bxi + \be^k\,;\,f)\big\} 
- 
V_{k}(\bxi\,;\,f)
%}
\eeq
and
\[
\Delta(k\,;f,\bxi)
= V^*_k(\bxi\,;\,f)
- V_k(\bxi\,;\,f)
\]
For convenience, denote
\[ \overline\delta(k\,;f,\bxi) = \big|\delta(k\,;f,\bxi)\big|\]
and
\[ \overline\Delta(k\,;f,\bxi) = \big|\Delta(k\,;f,\bxi)\big| ~.\]
We will rely on the following simple proposition:
\begin{prop} \label{ineq}
For every set 
$\{A,B,a,b\} \subset \bbR$, 
the following inequality holds: 
\[
\big|\max\{A,B\} 
   - \max\{a,b\}\big| 
\le |A-a| + |B-b| ~.\]
\end{prop}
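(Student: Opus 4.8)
The plan is to prove the inequality by a short case analysis that exploits two symmetries of the statement. First I would observe that the claim is symmetric under interchanging the pair $(A,B)$ with the pair $(a,b)$: both the left-hand side (an absolute value) and the right-hand side are invariant under this swap. Hence I may assume without loss of generality that $\max\{A,B\} \ge \max\{a,b\}$, which lets me drop the outer absolute value, so that the left-hand side equals $\max\{A,B\} - \max\{a,b\}$.

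Second, I would use the symmetry under simultaneously swapping $A\leftrightarrow B$ and $a\leftrightarrow b$ (which again fixes both sides) to arrange that the larger of $A,B$ is $A$, i.e.\ $\max\{A,B\}=A$. The heart of the argument is then the chain
\[
\max\{A,B\} - \max\{a,b\} = A - \max\{a,b\} \le A - a \le |A-a| \le |A-a| + |B-b| ~,
\]
where the first inequality uses $\max\{a,b\}\ge a$ and the last uses $|B-b|\ge 0$. Since the left-hand side is nonnegative under our reduction, this bounds $\big|\max\{A,B\}-\max\{a,b\}\big|$ as required.

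The only point needing care — rather than a real obstacle, as the statement is elementary — is checking that the two reductions are jointly legitimate. After fixing the orientation so that $\max\{A,B\}\ge\max\{a,b\}$, the paired swap $A\leftrightarrow B$, $a\leftrightarrow b$ remains a symmetry of both sides, so I can freely relabel to put the larger of $A,B$ into $A$ without disturbing the first reduction.

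An alternative route avoids case analysis entirely: writing $\max\{x,y\}=\tfrac12\big(x+y+|x-y|\big)$ gives
\[
\big|\max\{A,B\}-\max\{a,b\}\big| \le \tfrac12\big|(A-a)+(B-b)\big| + \tfrac12\big||A-B|-|a-b|\big| ~.
\]
Applying the triangle inequality to the first term and the reverse triangle inequality (followed by the triangle inequality) to the second, each term is bounded by $\tfrac12\big(|A-a|+|B-b|\big)$, and the two halves sum to exactly $|A-a|+|B-b|$. I expect the case-analysis proof to be the cleaner one to present.
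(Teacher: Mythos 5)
Your proof is correct. Note that the paper itself offers no proof of this proposition at all --- it is introduced only as ``the following simple proposition'' and used as a black box in the two subsequent propositions --- so there is nothing to compare against on the paper's side; you are supplying an argument the authors omitted. Both of your routes check out: the two symmetry reductions are indeed jointly legitimate (the paired swap $A\leftrightarrow B$, $a\leftrightarrow b$ preserves the condition $\max\{A,B\}\ge\max\{a,b\}$ established by the first reduction), and the chain $A-\max\{a,b\}\le A-a\le |A-a|\le |A-a|+|B-b|$ closes the case analysis. The identity-based alternative via $\max\{x,y\}=\tfrac12(x+y+|x-y|)$ also works as you describe, with the second term handled by the reverse triangle inequality applied to $\bigl||A-B|-|a-b|\bigr|\le |(A-a)-(B-b)|\le |A-a|+|B-b|$. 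Either version would be a suitable proof to insert; the case-analysis one is, as you say, the cleaner to present.
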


\begin{prop}
For every $n\in \bbN$, $k=1,\ldots,n$, $f$, $V$, and $\bxi\in \ssB_{n,k}$,
the following inequality holds:
\[
%\eqalign{
\overline\Delta
\big(k\,;\,f,\,\bxi\big)
~\le %&\ ~
\overline\Delta \big(k-1\,;\,f,\,\bxi\big) % \cr &
+ 
\overline\Delta \big(k-1\,;\,f,\,\bxi ~+~ \be^k \big)
+ \overline\delta\big(k\,;\,f,\,\bxi\big)~.
%}
\]
\end{prop}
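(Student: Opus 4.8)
The plan is to expand $\Delta(k\,;f,\bxi)$ using two facts already on the table: the recursion defining $V^*_k$ and the definition of $\delta(k\,;f,\bxi)$ in \eqref{eqqn:3}, rearranged to express $V_k$ as a maximum minus a deviation term. First I would write $V^*_k(\bxi\,;f)=\max\{V^*_{k-1}(\bxi\,;f),\,V^*_{k-1}(\bxi+\be^k\,;f)\}$ straight from the given recursion, and rewrite \eqref{eqqn:3} as $V_k(\bxi\,;f)=\max\{V_{k-1}(\bxi\,;f),\,V_{k-1}(\bxi+\be^k\,;f)\}-\delta(k\,;f,\bxi)$. Subtracting the second expression from the first then exhibits $\Delta(k\,;f,\bxi)$ as a difference of two maxima plus the single term $\delta(k\,;f,\bxi)$.

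The next step is to take absolute values and apply the ordinary triangle inequality, which peels off $\overline\delta(k\,;f,\bxi)$ and leaves the absolute difference of the two maxima. To that remaining quantity I would apply Proposition \ref{ineq} with the substitution $A=V^*_{k-1}(\bxi\,;f)$, $B=V^*_{k-1}(\bxi+\be^k\,;f)$, $a=V_{k-1}(\bxi\,;f)$, and $b=V_{k-1}(\bxi+\be^k\,;f)$. That proposition bounds the difference of maxima by $|A-a|+|B-b|$, which is precisely $\overline\Delta(k-1\,;f,\bxi)+\overline\Delta(k-1\,;f,\bxi+\be^k)$. Assembling the three pieces yields the claimed inequality.

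There is no substantive obstacle here; the argument is essentially a two-line chain, the triangle inequality followed by Proposition \ref{ineq}. The only point requiring care is the bookkeeping in the first step: one must use the definition of $\delta$ to replace $V_k$ by the maximum of its two predecessors minus the deviation, so that the very pair of maxima that Proposition \ref{ineq} is designed to compare appears cleanly. Once that substitution is in place, the choice of $A,B,a,b$ is forced and the bound follows immediately, which is exactly why the author isolated Proposition \ref{ineq} as a separate lemma beforehand.
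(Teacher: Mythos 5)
Your proposal is correct and follows essentially the same route as the paper's proof: both insert the intermediate quantity $\max\{V_{k-1}(\bxi\,;f),\,V_{k-1}(\bxi+\be^k\,;f)\}$, apply the triangle inequality to split off $\overline\delta(k\,;f,\bxi)$, and then invoke Proposition \ref{ineq} with exactly the substitution $A,B,a,b$ you describe. The only cosmetic difference is that you phrase the splitting via the rearrangement $V_k=\max\{\cdot,\cdot\}-\delta$ rather than adding and subtracting the maximum inside the absolute value, which is the same step.
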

\begin{proof}
We have
\[\eqalign{
\overline\Delta(&k\,;f,\bxi) 
= \big|
V^*_k(\bxi;\,f)
- V_k(\bxi;\,f)
\big|  \cr
%1 
\le &\
\big|
\max\big\{
V^*_{k-1}(\bxi;\,f) ~,~ V^*_{k-1}(\bxi+\be^k;\,f)
\big\}
% \cr & 
- \max\big\{
V_{k-1}(\bxi;\,f) ~,~ 
 V_{k-1}(\bxi+\be^k;\,f)
\big\}
\big|\cr
%2
&  +
\big|
\max\big\{
V_{k-1}(\bxi;\,f) ~,~ V_{k-1}(\bxi+\be^k;\,f)
\big\}
- V_k(\bxi;\,f)
\big| \cr
%3 
\le &\
\big|
V^*_{k-1}(\bxi;\,f) - V_{k-1}(\bxi;\,f)
\big|  %\cr& 
~+ \big|
V^*_{k-1}(\bxi+\be^k;\,f) -
 V_{k-1}(\bxi+\be^k;\,f)
\big|
+
\overline\delta\big(k\,;\,f,\,\bxi\big)
 \cr
=&\ 
\overline\Delta \big(k-1\,;\,f,\,\bxi\big) + 
\overline\Delta \big(k-1\,;\,f,\,\bxi + \be^k \big)
+ \overline\delta\big(k\,;\,f,\,\bxi\big)~.\qedhere
}\]
\end{proof}

\begin{prop}
For every $n\in\bbN$, $k=1,\ldots,n$, $f$, $V$, and $\betta\in \ssB_{n,k}$
\[
% \sum_{\bxismall\in B_{n,k}}
\overline\Delta(k\,;f,\,\betta) 
~\le~ \sum_{\ell=1}^k ~
\sum_{\bxismall \in \Xi_\ell(\bettasmall,\,k)}
\overline\delta\big(\ell\,;\,f,\,\bxi\big)
\]
\end{prop}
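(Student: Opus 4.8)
The plan is to prove the bound by induction on $k$, using the one-step recursion of the preceding proposition as the inductive engine and unrolling it down to level $0$, where the discrepancy vanishes: since $V_0(\bxi;f)=V^*_0(\bxi;f)=f(\bxi)$, we have $\overline\Delta(0;f,\bxi)=0$. It is convenient to include $k=0$ as a trivial base case, where both sides equal $0$ (the right-hand sum being empty).

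For the inductive step, fix $k\ge 1$ and $\betta\in\ssB_{n,k}$, and apply the preceding proposition to obtain
\[
\overline\Delta(k;f,\betta)\le \overline\Delta(k-1;f,\betta)+\overline\Delta(k-1;f,\betta+\be^k)+\overline\delta(k;f,\betta)~.
\]
Because $\eta_1=\cdots=\eta_k=0$, both $\betta$ and $\betta+\be^k$ lie in $\ssB_{n,k-1}$, so the induction hypothesis applies to each of the first two terms at level $k-1$, bounding $\overline\Delta(k-1;f,\betta)$ by $\sum_{\ell=1}^{k-1}\sum_{\bxismall\in\Xi_\ell(\bettasmall,k-1)}\overline\delta(\ell;f,\bxi)$ and $\overline\Delta(k-1;f,\betta+\be^k)$ by the analogous sum over $\Xi_\ell(\betta+\be^k,k-1)$.

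The crux is then a purely combinatorial identity on the index sets: for each $\ell=1,\dots,k-1$,
\[
\Xi_\ell(\betta,k)=\Xi_\ell(\betta,k-1)\,\sqcup\,\Xi_\ell(\betta+\be^k,k-1)~,
\]
a disjoint union. I would verify this by unwinding the definitions. A vector $\bxi\in\ssB_{n,\ell}$ belongs to $\Xi_\ell(\betta,k)$ iff it agrees with $\betta$ on coordinates $k+1,\dots,n$ while leaving coordinates $\ell+1,\dots,k$ free; splitting on the single value $\xi_k\in\{0,1\}$ recovers exactly the two sets on the right — the $\xi_k=0$ branch is $\Xi_\ell(\betta,k-1)$ (since $\eta_k=0$ forces $\xi_k=\eta_k=0$ there) and the $\xi_k=1$ branch is $\Xi_\ell(\betta+\be^k,k-1)$ — and these are disjoint because they prescribe opposite values of $\xi_k$. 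Combined with the observation that $\Xi_k(\betta,k)=\{\betta\}$, which turns the leftover $\overline\delta(k;f,\betta)$ into the $\ell=k$ summand, this collapses the three upper bounds into the single sum $\sum_{\ell=1}^{k}\sum_{\bxismall\in\Xi_\ell(\bettasmall,k)}\overline\delta(\ell;f,\bxi)$, closing the induction.

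I expect the only real obstacle to be this index-set bookkeeping: getting the ranges of free coordinates and the role of the distinguished coordinate $k$ exactly right so that the two level-$(k-1)$ families partition the level-$k$ family. Everything else is a direct substitution of the recursion and the induction hypothesis. Note that the recombination step is itself an \emph{equality}, so the only slack introduced is the triangle-type inequality of the preceding proposition together with the inductive bounds; no further looseness is incurred.
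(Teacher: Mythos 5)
Your proposal is correct and follows essentially the same route as the paper: induction on $k$, invoking the preceding one-step proposition, applying the inductive hypothesis to the two level-$(k-1)$ terms, and merging the two resulting sums via the disjoint-union identity $\Xi_\ell(\betta,k)=\Xi_\ell(\betta,k-1)\sqcup\Xi_\ell(\betta+\be^k,k-1)$ together with $\Xi_k(\betta,k)=\{\betta\}$. The only difference is that you spell out the base case and the index-set partition explicitly, which the paper leaves implicit.
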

\begin{proof}
By induction on $k$, 
\[
\eqalign{
%\sum_{\bxismall\in B_{n,k}}
\overline\Delta&(k\,;f,\,\betta) 
\le  
% \sum_{\bxismall\in B_{n,k}} \bigg\{
\overline\Delta \big(k-1\,; \,f,\,\betta\big) 
%\cr & ~~~ 
~~+ ~~
\overline\Delta \big(k-1\,;\,f,\,\betta + \be^k \big)
+ \overline\delta\big(k\,;\,f,\,\betta\big)
% \bigg\}  
\cr
%1 
\le &\
\sum_{\ell=1}^{k-1}~
\sum_{\bxismall \in \Xi_{\ell}(\bettasmall,\,k-1)}
\overline\delta(\ell\,;\,f,\,\bxi)
%\cr & ~+~
~~+~~  \sum_{\ell=1}^{k-1} ~
\sum_{\bxismall \in \Xi_{\ell}(\bettasmall + \besmall^k,\,k-1)}
\overline\delta(\ell\,;\,f,\,\bxi)
~+~ % \sum_{\bxismall\in B_{n,k}} 
\overline\delta\big(k\,;\,f,\,\betta\big) \cr
%2
= &\
\sum_{\ell=1}^{k-1} ~
\sum_{\bxismall\in \Xi_{\ell}(\bettasmall, \,k)} 
\overline\delta\big(\ell\,;\,f,\,\bxi\big) 
~+~ 
% \sum_{\bxismall\in B_{n,k}} 
\overline\delta\big(k\,;\,f,\,\betta\big) % \cr
=
\sum_{\ell=1}^{k} ~
\sum_{\bxismall\in \Xi_{\ell}(\bettasmall,\,k)} 
\overline\delta\big(\ell\,;\,f,\,\bxi\big) ~. \qedhere\cr
}
\]
\end{proof}
Below we consider the possibility of generating approximate value-functions $V$ of increasing accuracy by a training a neural net model.  
The function
$\Psi(V)$ will assist in obtaining
better functions $V$.

\section{\hskip -16pt. Parameterized approximate value functions}
%------------------------------------- 
We will consider
{\em value mappings} 
$V(f;\btheta)$, parameterized by a vector $\btheta$, which approximate
the optimal value function
$V^*(f)$ for $f\in\ssF$.
Denote by $\Theta$ the set of possible parameter vectors $\btheta$.

Given $V(\bul\,;\, \btheta)$ for a certain $\btheta\in\Theta$, a candidate solution 
$\bx=\bx(f;\,\btheta)$  can be generated for every 
$f\in \ssF_n$ by fixing the values of $x_1,\ldots,x_n$, successively, guided by the values 
$V(f^\bot;\,\btheta)$ 
of ``residual'' sub-instances $f^\bot$ of $f$.

For every $f\in \ssF_n$,
$k=1,\ldots,n$, and values
$\bxi=(\xi_{k+1},\ldots,\xi_n)\in\{0,1\}^{n-k}$,
denote by $f_{\bxismall}^\bot$
the sub-instance of $f$ that calls for maximizing
$f(x_1,\ldots,x_{k},\,
\xi_{j+1},\ldots,\xi_n)$
over $(x_1,\ldots,x_k)\in \{0,1\}^k$.  
Recall that
$V^*_k(\bxi\,;\,f)$ represents
the maximum of 
$f_{\bxismall}^\bot$.
By $\{0,1\}^0$ we mean the null string $\emptyset$. Thus, $f^\bot_\emptyset$ is the same as $f$.
Suppose $V(\bxi\,;\,f\,; \,\btheta)$
is an estimate for 
$V^*_k(\bxi\,;\,f)$.
\begin{theorem}
Given are an instance $f\in \ssF_n$ of the optimization problem
and a parameter vector $\btheta\in \Theta$. Suppose the following conditions hold: 
\begin{enumerate}
\itemi
For every $\bxi\in \{0,1\}^n$,
$V(\bxi\,;f\,;\,\btheta)=f(\bxi)$.
\itemii
For $k=1,\ldots,n$, and
$\bxi \in \{0,1\}^{n-k}$,
\[ V(\bxi;f;\btheta)
= \max
\big\{
V(\,(0,\bxi);f;\btheta)
~,~
V(\,(1,\bxi);\,f;\btheta)
\big\}
~.\]
\end{enumerate}
Under these conditions, for $k=1,\ldots,n$ and $\bxi\in \{0,1\}^{n-k}$, $V(\bxi\,;\,f\,;\btheta)$ is equal to the optimal value of the sub-instance $f^\bot_{\bxismall}$.
\end{theorem}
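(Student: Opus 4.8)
The plan is to observe that hypotheses \ist\ and \ind\ assert nothing more than that the approximate value function $V(\bul\,;\,f\,;\,\btheta)$ satisfies, with \emph{exact} equality, the very recurrence that defines the true optimal value function $V^*$. Once this is made precise, the equality $V=V^*$ follows from the summed deviation bound already proved, with essentially no further work; a direct induction on $k$ is an equally short alternative.

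First I would reconcile the two encodings in play. The theorem records a sub-instance by its fixed tail $\bxi=(\xi_{k+1},\ldots,\xi_n)\in\{0,1\}^{n-k}$, whereas the quantities $\delta$ and $\Delta$ were defined on the zero-padded vectors $\betta\in\ssB_{n,k}$ (those with $\eta_1=\cdots=\eta_k=0$). I identify the tail $\bxi$ with the vector $\betta=(0,\ldots,0,\xi_{k+1},\ldots,\xi_n)\in\ssB_{n,k}$, so that $V(\bxi\,;\,f\,;\,\btheta)$ becomes $V_k(\betta\,;\,f)$. Under this dictionary the two arguments $(0,\bxi)$ and $(1,\bxi)$ appearing in \ind\ are precisely $\betta$ (which has $\eta_k=0$) and $\betta+\be^k$ (which has $\eta_k=1$), both read at level $k-1$. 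Hence \ist\ says exactly $V_0(\betta\,;\,f)=f(\betta)=V^*_0(\betta\,;\,f)$, and \ind\ says exactly
\[
\delta(k\,;\,f,\,\betta)=\max\big\{V_{k-1}(\betta\,;\,f)\,,\,V_{k-1}(\betta+\be^k\,;\,f)\big\}-V_k(\betta\,;\,f)=0
\]
for every $k=1,\ldots,n$ and every $\betta\in\ssB_{n,k}$; in particular $\overline\delta(k\,;\,f,\,\betta)=0$ throughout.

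With every deviation identified as zero, I would finish by invoking the summed bound proved above: for any $k$ and any $\betta\in\ssB_{n,k}$,
\[
\overline\Delta(k\,;\,f,\,\betta)\;\le\;\sum_{\ell=1}^k\;\sum_{\bxismall\in\Xi_\ell(\bettasmall,\,k)}\overline\delta(\ell\,;\,f,\,\bxi)\;=\;0,
\]
since each $\Xi_\ell(\betta,k)\subseteq\ssB_{n,\ell}$ and every summand vanishes. As $\overline\Delta\ge 0$ by definition, this forces $\overline\Delta(k\,;\,f,\,\betta)=0$, i.e.\ $V_k(\betta\,;\,f)=V^*_k(\betta\,;\,f)$. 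Translating back, $V(\bxi\,;\,f\,;\,\btheta)$ equals the optimal value of $f^\bot_{\bxismall}$ for every admissible $k$ and $\bxi$, which is the assertion. Note that \ist\ is exactly what guarantees the base case $\overline\Delta(0\,;\,f,\cdot)=0$ implicitly used by that bound. The same conclusion also follows by a bare induction on $k$: the base case $k=0$ is \ist, and the step substitutes the inductive hypothesis $V_{k-1}=V^*_{k-1}$ into \ind\ and compares with the defining recurrence for $V^*_k$.

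I do not anticipate a substantive obstacle: the mathematical content is entirely in the bookkeeping, and the inequality $\overline\Delta\le\sum\overline\delta$ does all the work. The single point demanding care is the notational translation between the tail-only encoding $\bxi\in\{0,1\}^{n-k}$ of the statement and the zero-padded encoding $\betta\in\ssB_{n,k}$ of the definitions, so that \ind\ is correctly recognized as the vanishing of $\delta(k\,;\,f,\cdot)$ at the right level rather than of some index-shifted quantity.
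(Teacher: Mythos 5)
Your argument is correct, but it takes a different route from the paper. The paper proves the theorem by a direct, self-contained induction on $k$: the base case $k=0$ is condition \ist, and the inductive step observes that the optimal value of $f^\bot_{\bxismall}$ is the maximum of the optimal values of $f^\bot_{(0,\bxismall)}$ and $f^\bot_{(1,\bxismall)}$, which by the induction hypothesis and condition \ind\ equals $V(\bxi\,;f\,;\btheta)$ --- this is exactly the ``bare induction'' you sketch in your last sentences. Your primary route instead specializes the previously proved bound $\overline\Delta(k\,;f,\betta)\le\sum_{\ell}\sum_{\bxismall\in\Xi_\ell(\bettasmall,k)}\overline\delta(\ell\,;f,\bxi)$ to the case where \ist\ and \ind\ force every $\overline\delta$ to vanish, so $\overline\Delta\equiv 0$. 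Your bookkeeping between the tail encoding $\bxi\in\{0,1\}^{n-k}$ and the zero-padded encoding $\betta\in\ssB_{n,k}$ is handled correctly, including the identification of $(0,\bxi)$ and $(1,\bxi)$ with $\betta$ and $\betta+\be^k$ at level $k-1$, and you rightly note that \ist\ supplies the base condition $V_0=f$ that the earlier proposition presupposes. What your route buys is a clean statement that the theorem is the exact (zero-deviation) limit of the quantitative machinery that motivates the loss function $\Psi$; what the paper's route buys is independence from that machinery --- its induction needs nothing beyond the two hypotheses and the trivial decomposition of the optimum over the two values of $x_k$. Both are valid proofs of the statement.
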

\begin{proof}
The proof is by induction on $k$. 

First, for $k=0$, by condition $\ist$, 
$\bxi\in \{0,1\}^n$, $V(\bxi\,;\,f\,;\,\btheta)=f(\bxi)$
so it is optimal.

Second, for $k>0$, let $\bxi\in\{0,1\}^{n-k}$.
By the induction hypothesis, since
$(1,\bxi) \in\{0,1\}^{n-(k-1})$ 
and 
$(0,\bxi) \in\{0,1\}^{n-(k-1)}$,
then
$V((0,\bxi)\,;\,f;\,\btheta)$
is equal to the optimal value of
$f^\bot_{(0,\bxismall)}$,
and
$V((1,\bxi)\,;\,f;\,\btheta)$
is equal to the optimal value of
$f^\bot_{(1,\bxismall)}$.
However, the optimal value of 
$f^\bot_{\bxismall}$ is obviously equal to the maximum of the optimal values of $f^\bot_{(0,\bxismall)}$
and $f^\bot_{(1,\bxismall)}$, hence it is equal to $V(\bxi\,;\,f;\,\btheta)$.
\end{proof}
\subsection{An iteration of stochastic gradient}
Recall that
\[ % \label{def:psi1}
\Psi_n(V) = \EE_{f\in\ssF}\bigg[
\sum_{k=1}^n 
\sum_{\bxismall \in B_{n,k}} \overline\delta(k\,;\,f\,;\,\bxi)
\bigg]~.
\]
When the valuation of $V$ is carried out by a parameterized mechanism, i.e.,
$V(f)=V(f\,;\btheta)$, the values of
$\delta(k\,;\,f\,;\,\bxi)$ and
$\Psi_n(V)$ depend on $\btheta$,
and we consider the following function:
\eq  \label{eq:10}
% \eqalign{
\varphi(\btheta) = %&
\ \Psi_n(V(\bul;\btheta))
= \EE_{f\in \ssF_n}
\bigg[
\sum_{k=1}^n 
\sum_{\bxismall \in B_{n,k}} \overline\delta(k\,;\,f\,;\,\bxi\,;\btheta)
\bigg]  %\cr=&\
= \sum_{f\in \ssF_n} 
\sum_{k=1}^n 
\sum_{\bxismall \in B_{n,k}} 
p(f)\cdot\overline\delta(k\,;\,f\,;\,\bxi\,;\btheta)
~. %}
\eeq
Obviously, $\varphi(\btheta)$ is a sum of many functions of $\btheta$, which are differentiable except for a set of measure zero, namely, at points where the maximum in some term is not differentiable.
See Remark \ref{rem:6} below.
Hence, the gradient of $\phi(\btheta)$ is equal to the sum of the gradients of all those functions, i.e.,
\[% \eqalign{
\nabla_{\bthetasmall}\,
\varphi(\btheta)
= \sum_{f\in \ssF_n} 
\sum_{k=1}^n 
\sum_{\bxismall \in B_{n,k}} %\cr& 
p(f)\cdot
\sgn(\delta(k\,;\,f\,;\,\bxi\,;\btheta))\cdot
\nabla_{\bthetasmall}\,
\delta(k\,;\,f\,;\,\bxi\,;\btheta)
~. %}
\]
Recall that there is underlying probability distribution $p$ over the set $\ssF$ of instances.  
We can work either with a fixed dimension of instances or with a conditional distribution based on $p$, given a maximum dimension on instances
To carry out a stochastic-gradient step
for $f(\btheta)$, we take a sample of size $r$ of sub-instances as follows. We take a sample of size $r$ from a uniform distribution over all possible 
sub-instances of instances in $\ssF$, where each sub-instance $f^\bot_{\bxismall}$ is defined by a pair 
$(f,\, \bxi)$,  
so that 
if $f\in \ssF_n$, then
$\bxi \in B^{n-k}$ for some 
$k \le n$.  
The sub-instance contributes a term
$g(\btheta)= g(\btheta\,;\,f,\,\bxi)$
to the stochastic gradient, 
% depending on which set it belongs to among the three
% sets $sV_{n,j,0}$, $sV_{n,j,1}$ or
%$\ssV_{n,j,01}$, as specified in (\ref{S0})--(\ref{S01}),
weighted by the probability $p(f)$
assigned to the instance $f$
in the underlying distribution $p$.
For example, if sub-instance $(f,\bxi)$ is sampled, then the corresponding term that appears in the sum in (\ref{eq:10}) is included in the stochastic gradient, multiplied by $p(f)$.
\begin{remark} \label{rem:6} \rm
Note that for
$k$, $f$ and $\bxi$, the function of $\btheta$ defined by
$\delta(k;f;\bxi;\btheta)$
as in (\ref{eqqn:3}), i.e.,
\[
% \eqalign{
\delta(k\,;f,\,\bxi,\btheta) = %\cr &\
 \max\big\{V_{k-1}(\bxi;f;\,\btheta) \,,\,
V_{k-1}(\bxi + \be^k;f;\,\btheta)\big\} 
- 
V_{k}(\bxi;f;\btheta)
%}
\]
may not be differentiable
due to the $\max$ operation.
A common way to get a smooth approximation is the following:
\[ 
\max(x,y) \approx
\frac{
  x\, e^{\alpha x}
+ y\, e^{\alpha y}
}
{e^{\alpha x} + e^{\alpha y}}~,
\]
where $\alpha >0$ is an optional parameter that controls the approximation.

Similarly, the absolute-value function
\[ |x| = \max\{x,\,-x\}
\] 
can be approximated by
\[a(x) = x \cdot \frac{e^{\alpha\,x} 
       - e^{-\alpha\,x}}
       {e^{\alpha\,x} 
       + e^{-\alpha\,x}}
       \]
or, more simply, by
\[ b(x) = \sqrt{x^2 + \alpha^2} ~.\]
\end{remark}

\section{\hskip -16pt. Two approaches to the Knapsack problem}

The standard Knapsack packing problem is the following:
\eq \label{eqn:23}
% \eqalign{ 
\mbox{Maximize }~  \sum_{j=1}^n c_j\,x_j
 \mbox{~~subject to }  ~ 
\sum_{j=1}^n a_j\,x_j \le b ~~~\mbox{~and~}
 x_1,\ldots,x_n\in \{0,1\}
\eeq
where $a_j > 0$, $j=1,\ldots n$.
An instance is feasible if and only if $b\ge 0$. 
However, depending on how the dynamic programming algorithm is set up, infeasible  sub-instances may arise. A a simple way to avoid that is as follows.
\subsection{Avoiding infeasible sub-instances}\label{sec3.1}
For $a_1,\ldots,a_k,b\ge 0$ and any
$c_1,\ldots,c_k\in \bbR$, denote by
$V^*((c_1,\ldots,c_k),\,(a_1,\ldots,a_k),\,b)$ the maximum of 
$\sum_{j=1}^k c_j\,x_j$ subject to
$\sum_{j=1}^k a_j\,x_j \le b$ ($x_1,\ldots,x_k\in \{0,1\}$.
For stating a  dynamic-programming equation,
denote $\bc = (c_1,\ldots,c_{k-1})$ and
$\ba = (a_1,\ldots,a_{k-1})$. The equation is the following:
\[ 
%\eqalign{
V^*((\bc,c_k),\,(\ba,a_k),\,b) =
%\cr&
\begin{cases}
    \max\{c_k + V^*(\bc,\,\ba,\, b - a_k)
    ~,~ V^*(\bc,\,\ba,\, b)\} 
   & \mbox{if $b\ge a_k$}  \\
    ~~~~~ V^*(\bc,\,\ba,\, b)   
   & \mbox{if $b < a_k$}
\end{cases}  %}
\]

For brevity, denote an instance of the problem (\ref{eqn:23})
by
$ \bi = (\bc,\,\ba,\,b)$,
and denote the set of such instances of dimension $n$ by $\ssI_n$.
 For $k=1,\ldots,n$ and $\bxi=(0,\ldots,0,\,\xi_{k+1},\ldots,\xi_n)\in B^{n}$, such that 
 $\sum_{j=k+1}^n a_j\xi_j \le b$,
denote by $V^*(k\,;\,\bi,\,\bxi)$ the maximum objective-function value of the following problem:
\[
%  \eqalign{ 
\mbox{Maximize }~ 
\sum_{j=1}^k c_j\,x_j ~~
\mbox{subject to }~
\sum_{j=1}^k a_j\,x_j 
~\le~ b - \sum_{j=k+1}^n a_j\xi_j
%  &~ x_0\ge 0~\cr
\mbox{~~and~~} x_1,\ldots,x_k\in \{0,1\}~. 
\]
Let $\be^k\in B^n$ denote the unit vector with $1$ in the $k$th position.
The dynamic-programming equation for the latter problem is the following:
\[ % \label{enq:dp}
%\eqalign{
V^*(k;\,\bi,\bxi)
=
\begin{cases}
\max\big\{V^*(k-1; \bi,\bxi)
    \,,\, c_{k} + 
    V^*(k-1; \bi,\bxi)
    \big\}&
%\\~~~~~~~~~~~~~~~~~~~~~~~~~~~~~~~~~~~~~~~~~~~
    \mbox{if~} b\ge a_k+ \sum_{j=k+1}^n a_j\xi_j \\
    ~~~V^*(k-1; \bi,\bxi)  ~~~~~~~~~~~~~~~~~~~~~~~
   & \mbox{otherwise}
    \end{cases}
\]
where $V^*(0\,;\,\bul)=0$.
If $V(k\,;\,\bi,\,\bxi)$ is a function whose domain is the domain of $V^*$,
and $V(0\,;\,\bul)=0$, then we define
\[ \Phi_n(V,\,V^*)
=\EE_{ \bismall\in \ssI_n}
\bigg[~\big|
V(n\,;\bi,\,\bzero) -
V^*(n\,;\bi,\,\bzero) 
\big|~
\bigg]~.
\]
For $n\in \bbN$ and $k=1,\ldots,n$, denote
\[ \ssB_{n,k}
= \left\{\bxi \in \{0,1\}^n\,:\, \xi_1=\cdots=\xi_k=0\right\} ~.\]
Also, for $\ell=1,\ldots,k$
and $\betta \in \ssB_{n,k}$,
denote
\[ 
% \eqalign{
\Xi_\ell(\betta)
= %&\ 
\Xi_\ell(\betta,\,k) % \cr
= %&\ 
\big\{ \bxi \in \ssB_{n,\ell}~:~
\xi_j = \eta_j,~ j=k+1,\ldots,n \big\} ~.%}
\]
Note that
$ \Xi_k(\betta) =  \big\{\betta\big\}$.
\begin{definition}\rm
If 
$\sum_{j=k+1}^n a_j\,\xi_j \le b $, 
define $\delta(k\,;\bi,\,\bxi)$ as follows.
\begin{enumerate}
\itemi
if $a_k+\sum_{j=k+1}^n a_j\,\xi_j \le b$,
define 
\[ 
%\eqalign{
\delta(k\,;\bi,\,\bxi) = %\cr&
\max\big\{V(k-1\,;\,\bi,\,\bxi)
    \,,\, c_{k} + 
    V(k-1\,;\,\bi,\,\bxi+\be^k)
    \big\} 
   % \cr &~~~~~~~~
   - V(k\,;\,\bi,\,\bxi) 
~;
\]    
\itemii
if 
$a_k+ \sum_{j=k+1}^n a_j\,\xi_j
> b$,
define
\[ \delta(k\,;\bi,\,\bxi) =
V(k-1\,;\,\bi,\,\bxi)
- V(k\,;\,\bi,\,\bxi) 
\]
\end{enumerate}
and, for convenience, denote
$\overline\delta(k\,;\bi,\,\bxi) =
\big|\delta(k\,;\bi,\,\bxi)\big|$.
Also, define
\[ \Psi_n(V)
=\EE_{\bismall\in \ssI_n}
\bigg[\sum_{k=1}^n
\sum_{\bxismall \in \ssB_{n,k}} \bigg\{
\overline\delta(k\,;\bi,\,\bxi) ~:~
\sum_{j=k+1}^n a_j\,\xi_j \le b 
\bigg\}
\bigg]~.
\]
\end{definition}
\begin{definition}\rm
If 
$ \sum_{j=k+1}^n a_j\,\xi_j \le b$,
define
\[ \Delta(k\,;\bi,\,\bxi)
= V^*(k\,;\bi,\,\bxi) -
 V(k\,;\bi,\,\bxi) ~,
\]
and, for convenience, denote
$\overline\Delta(k\,;\bi,\,\bxi)=\big|\Delta(k\,;\bi,\,\bxi)\big|$.
\end{definition}
We will rely on  Proposition \ref{ineq}.
\begin{prop} \label{prop3.4}
The following inequality holds for every $n\in\bbN$, an instance $\bi\in\ssI_n$,
$k=1,\ldots,n$, a value-function $V$,
and $\betta \in \ssB_{n,k}$such that
\/ $\sum_{j=k+1}^n a_j\, \xi_j \le b$:
\[  %\label{eq:29}
% \eqalign{
\overline\Delta(k\,;\,\bi,\,\betta) 
~\le ~
\sum_{\ell=1}^k ~
\sum_{\bxismall \in \Xi_\ell(\bettasmall,\,k)} 
%\cr&
\bigg\{
\overline\delta(\ell\,;\,\bi,\,\bxi)
~:~ \sum_{j=\ell+1}^n a_j\,\eta_j \le b
\bigg\}~.
%}
\]
\end{prop}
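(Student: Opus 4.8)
The plan is to reproduce the two-step argument used in the unconstrained case (Propositions 2 and 3), adapting it to the Knapsack branching rule. First I would prove a one-level recursion inequality---the Knapsack analogue of Proposition 2---and then iterate it by induction on $k$, using the disjoint decomposition of $\Xi_\ell(\betta,k)$ into its $\xi_k=0$ and $\xi_k=1$ slices. The feature distinguishing this from the unconstrained setting is that $\delta(k\,;\bi,\bxi)$ is defined by cases according to whether item $k$ fits, and in the infeasible case the branch $\betta+\be^k$ disappears from the recursion; keeping the bookkeeping consistent with the feasibility filter attached to each summand on the right-hand side is where the work lies.

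For the one-level inequality I would start from $\overline\Delta(k\,;\bi,\betta)=|V^*(k\,;\bi,\betta)-V(k\,;\bi,\betta)|$, insert the ``approximate'' inner expression $\max\{V(k-1\,;\bi,\betta),\,c_k+V(k-1\,;\bi,\betta+\be^k)\}$ in case $\ist$ (or $V(k-1\,;\bi,\betta)$ in case $\ind$), and apply the triangle inequality. In case $\ist$, where $a_k+\sum_{j=k+1}^n a_j\eta_j\le b$, the second piece is exactly $\overline\delta(k\,;\bi,\betta)$, and the first piece is bounded by Proposition \ref{ineq} with $A=V^*(k-1\,;\bi,\betta)$, $a=V(k-1\,;\bi,\betta)$, $B=c_k+V^*(k-1\,;\bi,\betta+\be^k)$, $b=c_k+V(k-1\,;\bi,\betta+\be^k)$; the additive constant $c_k$ cancels in $|B-b|$, giving $\overline\Delta(k\,;\bi,\betta)\le\overline\Delta(k-1\,;\bi,\betta)+\overline\Delta(k-1\,;\bi,\betta+\be^k)+\overline\delta(k\,;\bi,\betta)$. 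In case $\ind$, where item $k$ overflows the budget, both $V^*$ and $\delta$ reference only $V(\cdot\,;\bi,\betta)$, so the plain triangle inequality yields $\overline\Delta(k\,;\bi,\betta)\le\overline\Delta(k-1\,;\bi,\betta)+\overline\delta(k\,;\bi,\betta)$, with no $\betta+\be^k$ term.

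I would then induct on $k$, taking $\overline\Delta(0\,;\cdot)=0$ as the base since $V(0\,;\cdot)=V^*(0\,;\cdot)=0$. The combinatorial engine is that, for each $\ell\le k-1$, the set $\Xi_\ell(\betta,k)$ is the disjoint union of $\Xi_\ell(\betta,k-1)$ (its $\xi_k=0$ slice) and $\Xi_\ell(\betta+\be^k,k-1)$ (its $\xi_k=1$ slice), while $\Xi_k(\betta,k)=\{\betta\}$ supplies the level-$k$ term $\overline\delta(k\,;\bi,\betta)$. In case $\ist$ the induction hypothesis applied to both $\betta$ and $\betta+\be^k$ (each feasible at level $k-1$, the latter precisely because of the case $\ist$ inequality) recombines over these two slices to give exactly $\sum_{\ell=1}^{k}\sum_{\bxismall\in\Xi_\ell(\bettasmall,k)}\{\overline\delta(\ell\,;\bi,\bxi):\text{feasible}\}$. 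The main obstacle is case $\ind$: there the one-level bound omits the $\betta+\be^k$ branch, so I must check that the target right-hand side is unchanged by this omission. The point is that every $\bxi$ in the $\xi_k=1$ slice $\Xi_\ell(\betta+\be^k,k-1)$ has $\sum_{j=\ell+1}^n a_j\xi_j\ge a_k+\sum_{j=k+1}^n a_j\eta_j>b$ (using $a_j>0$), so all such terms are discarded by the feasibility filter; hence the filtered sum of $\overline\delta$ over $\Xi_\ell(\betta,k)$ coincides with the filtered sum over $\Xi_\ell(\betta,k-1)$, and the induction closes exactly as in case $\ist$.
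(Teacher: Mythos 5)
Your proposal is correct and follows essentially the same route as the paper: the one-level recursion via Proposition \ref{ineq} in the feasible case, the plain triangle inequality in the infeasible case, and induction on $k$ with the slice decomposition of $\Xi_\ell(\betta,k)$. You are in fact slightly more careful than the paper in the infeasible case, where you explicitly verify that every term of the $\xi_k=1$ slice is removed by the feasibility filter (since $a_j>0$), a bookkeeping point the paper's proof passes over silently.
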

\begin{proof}
The proof is by induction on $k$.
First, for $\betta$ such that
$
b-a_k < \sum_{j=k+1}^n a_j\,\eta_j \le b
$,
we have
\[  %\label{eqn:30}
\eqalign{
\overline\Delta(k\,;\,\bi,\,\betta)  
\le &\
\big|V^*(k-1\,;\,\bi,\,\betta) -   
V(k-1\,;\,\bi,\,\betta)\big|
%\cr &
~+~
\big|V(k-1\,;\,\bi,\,\betta) -   
V(k\,;\,\bi,\,\betta)\big|
\cr
%3 
=&\
\overline\Delta(k-1\,;\,\bi,\,\betta) 
 +
\overline\Delta(k\,;\,\bi,\,\betta)
 ~.\cr
}\]
Second,
for $\betta\in \ssB_{n,k}$ such that
$ a_k +\sum_{j=k+1}^n a_j\eta_j \le b$, 
we rely on Proposition \ref{ineq}
and obtain

\[\overline\Delta(k\,;\bi,\betta) \le
 \overline\Delta(k-1\,;\,\bi,\,\betta)
+\overline\Delta(k-1\,;\,\bi,\,\betta+\be^k)
+\overline\delta(k\,;\,\bi,\,\betta) ~.
\]
By the induction hypothesis,
for $\betta\in \ssB_{n,k-1}$ such that
$\sum_{j=k}^n a_j\eta_j \le b$,
\[  % \label{eq:29a}
% \eqalign{ 
\overline\Delta(k-1;
\bi,\,\betta) 
~~\le ~~
\sum_{\ell=1}^{k-1} ~
\sum_{\bxismall \in \Xi_\ell(\bettasmall,\,k-1)} 
% \cr & 
\bigg\{
\overline\delta(\ell\,;\,\bi,\,\bxi)
:
\sum_{j=\ell+1}^n a_j\,\xi_j \le b\bigg\}~,
%}
\]
and if $b - a_k < \sum_{j=k+1}a_j\,\xi_j \le b$, then
\[ % \label{eq:29b}
% \eqalign{
\overline\Delta(k-1\,;\,
\bi,\,\betta+\be^k) 
~~\le~~
\sum_{\ell=1}^{k-1} ~
\sum_{\bxismall \in \Xi_\ell(\bettasmall+\besmall^k,\,k-1)} 
%\cr&
\bigg\{
\overline\delta(\ell\,;\,\bi,\,\bxi)
:
\sum_{j=\ell+1}^n a_j\,\xi_j \le b\bigg\}~,
% }
\]
hence, for $\betta\in\ssB_{n,k}$ such that
$\sum_{j=k+1}^n a_j, \eta_j \le b$,
\eq \label{eqn:34}
%\eqalign{
\overline\Delta(k;\,\bi,\betta) 
% \cr
%1
\le
% \sum_{\bettasmall\in \ssB_{n,k}}
\sum_{\ell=1}^{k} ~
\sum_{\bxismall\in \Xi_\ell(\bettasmall,\, k)}
%\cr &~~~~~~~~~~~~~
\bigg\{
\overline\delta(\ell\,;\,\bi,\bxi)
:
 \sum_{j=\ell+1}^n a_j\,\xi_j\le b
\bigg\} ~.\qedhere
\eeq
\end{proof}
By taking the expectations of both sides of (\ref{eqn:34}) over instances $\bi$, we get the following inequality:
\begin{theorem} \label{theo}
For every $n$ and every value function $V$,
\[
\Phi_n(V,V^*) \le \Psi_n(V) ~.
\]
\end{theorem}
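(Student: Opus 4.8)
The plan is to reduce the theorem to a single application of Proposition~\ref{prop3.4}, specialized to the root of the dynamic program, followed by taking expectations. First I would rewrite the left-hand side: by the definition of $\overline\Delta$ we have $|V(n;\bi,\bzero)-V^*(n;\bi,\bzero)| = \overline\Delta(n;\bi,\bzero)$, so $\Phi_n(V,V^*) = \EE_{\bismall\in\ssI_n}[\,\overline\Delta(n;\bi,\bzero)\,]$. Thus the whole claim will follow once I bound $\overline\Delta(n;\bi,\bzero)$ pointwise by the summand defining $\Psi_n$ and then invoke monotonicity of expectation.

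Next I would apply Proposition~\ref{prop3.4} (equivalently, inequality~(\ref{eqn:34})) with $k=n$ and $\betta=\bzero$; note that $\ssB_{n,n}=\{\bzero\}$, so this is a legitimate choice of $\betta$, and the feasibility requirement $\sum_{j=n+1}^n a_j\eta_j\le b$ reads $0\le b$, which holds for every feasible instance. The one computation that needs care is identifying the index set $\Xi_\ell(\bzero,n)$ on the right-hand side. Since $\Xi_\ell(\betta,k)=\{\bxi\in\ssB_{n,\ell}:\xi_j=\eta_j,\ j=k+1,\ldots,n\}$, taking $k=n$ makes the constraint $\xi_j=\eta_j$ range over the empty index set $j=n+1,\ldots,n$, hence it is vacuous and $\Xi_\ell(\bzero,n)=\ssB_{n,\ell}$. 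Consequently, for each fixed instance $\bi\in\ssI_n$, Proposition~\ref{prop3.4} gives $\overline\Delta(n;\bi,\bzero)\le\sum_{\ell=1}^n\sum_{\bxismall\in\ssB_{n,\ell}}\{\overline\delta(\ell;\bi,\bxi):\sum_{j=\ell+1}^n a_j\xi_j\le b\}$.

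Finally I would take expectations over $\bi\in\ssI_n$ on both sides. Monotonicity and linearity of $\EE$ preserve the pointwise inequality, so $\EE_{\bismall}[\overline\Delta(n;\bi,\bzero)]\le\EE_{\bismall}[\sum_{\ell=1}^n\sum_{\bxismall\in\ssB_{n,\ell}}\{\overline\delta(\ell;\bi,\bxi):\sum_{j=\ell+1}^n a_j\xi_j\le b\}]$; renaming the dummy index $\ell$ as $k$, the left side is $\Phi_n(V,V^*)$ and the right side is, verbatim, the definition of $\Psi_n(V)$. This yields $\Phi_n(V,V^*)\le\Psi_n(V)$.

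I do not anticipate a genuine difficulty here, because all of the combinatorial bookkeeping is already packaged inside Proposition~\ref{prop3.4}; the only step that demands even a moment's attention is verifying the collapse $\Xi_\ell(\bzero,n)=\ssB_{n,\ell}$, i.e., confirming that summing the local deviations over the residual sub-instances of the all-zero root reproduces exactly the full double sum over $\ssB_{n,\ell}$ that appears in the definition of $\Psi_n$.
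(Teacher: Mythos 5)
Your proof is correct and follows exactly the paper's route: the paper's entire argument is to take expectations of both sides of inequality~(\ref{eqn:34}) (i.e., Proposition~\ref{prop3.4} with $k=n$ and $\betta=\bzero$) over instances $\bi\in\ssI_n$. You simply make explicit the bookkeeping the paper leaves implicit, namely that $\ssB_{n,n}=\{\bzero\}$, that the root feasibility condition reduces to $0\le b$, and that $\Xi_\ell(\bzero,n)=\ssB_{n,\ell}$ so the right-hand side collapses to the definition of $\Psi_n(V)$.
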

\subsection{Formulation with guaranteed feasibility}
The problem stated in (\ref{eqn:23})
is equivalent to the following problem 
with one artificial variable:
\eq \label{eqn:36}
%\eqalign{ 
\mbox{Maximize }~ 
\sum_{j=1}^n c_j\,x_j - \sum_{j=1}^n c_j \cdot x_0 ~~
\mbox{s.t.~}
\sum_{j=1}^n a_j\,x_j -
\sum_{j=1}^n a_j\cdot x_0 
\le b \mbox{~~and~~}
~ x_0,x_1,\ldots,x_n\in \{0,1\}~,
\eeq 
Obviously, the latter is feasible even 
if $b<0$.  
% As in Section \ref{sec3.1},
Denote an instance by
$ \bi = (\bc,\,\ba,\,b)$,
and the set of such instances of dimension $n$ by $\ssI_n$. 
For $k=1,\ldots,n$ and $\bxi=(0,\ldots,0,\,\xi_{k+1},\ldots,\xi_n)\in B^{n}$,
denote by $V^*(k\,;\,\bi,\,\bxi)$ the maximum objective-function value of the following problem:
\[ %  \label{eqn:24} 
\mbox{Maximize} 
\sum_{j=1}^k c_jx_j - \sum_{j=1}^n c_jx_0~
\mbox{s.t.}~
\sum_{j=1}^k a_jx_j 
- \sum_{j=1}^n a_jx_0
\le b - \sum_{j=k+1}^n a_j\xi_j
\mbox{, }
x_0, x_1,\ldots,x_k\in \{0,1\}~. 
\]
The optimality equation of the latter is the following:
\[  % \label{enq:dp2}
%\eqalign{
V^*(k;\,\bi,\,\bxi) %\cr=&\
=
\max\big\{V^*(k-1;\bi,\bxi)
   \, ,\, c_{k} + 
    V^*(k-1;\bi,\,\bxi+\be^k)
    \big\} 
%}
\]
and the functions $\Phi$ and $\Psi$ (appropriately defined with restriction to feasible instances) satisfy the inequality of Theorem \ref{theo}.

\begin{remark}\rm
Consider the formulation of the Knapsack problem
as in (\ref{eqn:36}), and assume $b$ and the $a_j$s are integers. Thus, if $\sum_{j=1}^na_j\, x_j > b$, then $\sum_{j=1}^na_j\, x_j - b \ge 1$. 
We may define the value 
$V^*(\bc,\,\ba,\,b)$
of an instance $(\bc,\,\ba,\,b)$
with the following set of functions, 
for $\bx\in \{0,1\}^n$:
\vskip -12pt
\[ %\eqalign{
\phi(\bc,\,\ba,\,b)
= \phi(\bc,\,\ba,\,b\,;\,
\bx))  % \cr=&\
=\bc^\top \bx - \sum_{j=1}^n c_j\cdot
\max\{0,\,\ba^\top \bx - b
\}~.%}
\]
\vskip -6pt
Thus,
\[ V^*(\bc,\,\ba,\,b)
= \max\big\{
\phi(\bc,\,\ba,\,b\,;\,
\bx))~:~\bx \in \{0,1\}^n 
\big\}
~,\]
hence $V^*$ is continuous.
\end{remark}
\section{\hskip -16pt. Examples}
\subsection{Maximum Weighted Satisfiability}
\begin{definition} \rm
The {\em Maximum-Weighted-Satisfiability}
problem is defined as follows.
Given $m$ Boolean functions in $n$ variables, $\beta_i(x_1,\ldots,x_n)$
and rational coefficients $c_i$,
$i=1,\ldots,m$,
assign truth-values for $x_1,\ldots,x_n\in\{0,1\}$ so as to maximize the function
$\sum_{i=1}^m c_i\, \beta_i\,(x_1,\ldots,x_n)$.
\end{definition}
% \vskip -12pt
Given functions $\beta_i$s and coefficients $c_i$s, 
for $k=1,\ldots,n$ and $(\xi_{k+1},\ldots,\xi_n)
\in \{0,1\}^{n-k}$,
denote by 
$V^*_k(\xi_{k+1},\ldots,\xi_n)$
the maximum (over $(x_1,\ldots,x_k)\in 
\{0,1\}^k$ of  the sum\\
$\sum_{i=1}^m c_i\, \beta_i\,(x_1,\ldots,x_k,\xi_{k+1},\ldots,\xi_n)$.
The optimality condition is the following:
\[ 
%\eqalign{
V_k(\xi_{k+1},\ldots,\xi_n)
= %\cr&
\max\big\{
V_{k-1}(0,\,\xi_{k+1},\ldots,\xi_n)
,~
V_{k-1}(1,\,\xi_{k+1},\ldots,\xi_n)
\big\}~.
%}
\]
Thus, Theorem \ref{theo}
can be applied directly.

\subsection{Maximum weighted independent set}
\begin{definition}\rm
The {\em Maximum-Weighted-Independent-Set}
problem is defined as follows.
Given a weighted graph $(G,\bw)$, i.e., a graph $G$ on $n$ nodes with weights
$\bw =(w_1,\ldots,w_n) \in \bbR^n$, find a set $S\subset N=\{1,\ldots,n\}$ of independent nods, (i.e., no two nodes in $S$ are connected by an arc) so as to maximize
$\sum_{i\in S} w_i$.
\end{definition}
\begin{definition}\rm
Given a weighted  graph $(G,\bw)$,
\vskip - 36pt
\begin{enumerate}
\itemi
for every node $i\in N$, denote by 
$N(i)$ the set of neighbors of $i$, i.e.,
$i\in N(i)$ and $j\in N(i)$ if 
$(i,j)$ is an arc;
\itemii
for every subset of nodes $S\subset N$,
denote by $G\setminus S$ the graph obtained by $G$ by dropping the nodes in $S$ as well as all the arcs incident on any $i\in S$; 
denote by $\bw \setminus S$ the vector obtained from $\bw$ by dropping all the entries corresponding to any $i\in S$.
\end{enumerate}
\vskip -12pt
\end{definition}
Denote by $V^*(G,\bw)$ the maximum weight of an independent set of nodes in $G$. 
Let the nodes of $G$ be $i_1<\cdots<i_k$ (so
$\bw=(w_{i_1},\ldots,w_{i_k})$).
The optimality equation for $V^*$ is the following:
\eq  \label{eqn:26}
%\eqalign{
V^*(G,\bw) = 
%&\ 
\max\big\{
V^*(G\setminus\{i_k\}
,\,\bw \setminus\{i_k\})~,~
% \cr &~~~~~~~ 
w_{i_k} + V^*(G\setminus N(i_k)
,\,
\bw \setminus N(i_k))\big\}~.
%}
\eeq
Denote by $V(G,\bw)$ any function with the same domain as $V^*$.
\begin{prop}\label{prop:3.1ind}
The following inequality holds for every
weighted graph $(G,\bw)$ with $k$ nodes and value-function $V$:
\eq \label{eqn:32}
\eqalign{
\big|
V^*(&G,\,\bw) -  
V(G,\,\bw)\big|  %  \cr
\le 
\big|
V^*(G\setminus \{i_k\},\,\bw\setminus \{i_k\})
- V(G\setminus \{i_k\},\,\bw\setminus \{i_k\})
\big|\cr
&+\big|
V^*(G\setminus N(i_k),\,\bw\setminus N(i_k))
%\cr &~~~~~~ 
- V(G\setminus N(i_k),\,\bw\setminus N(i_k))
\big|\cr
&+\big|\max\big\{
 V(G\setminus\{i_k\},\,\bw\setminus\{i_k\})~,~
 %\cr&~~~~~~~~~~~~~~~~~~
w_{i_k} + V(G\setminus N(i_k),\,\bw\setminus N(i_k))
 \big\}  %\cr& ~~~~~~~ 
 - V(G,\,\bw)\big|~.
}
\eeq
\end{prop}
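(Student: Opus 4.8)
The plan is to mirror the single-step argument used in the proposition preceding Proposition~\ref{prop3.4}, adapting it to the independent-set recurrence~(\ref{eqn:26}). The quantity to bound is $\bigl|V^*(G,\bw)-V(G,\bw)\bigr|$, and the right-hand side of~(\ref{eqn:32}) already has the expected shape of one recursion step: two ``child'' error terms plus a single local-discrepancy term. No induction is needed, since the statement concerns just one unfolding of the recurrence.

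First I would invoke the optimality equation~(\ref{eqn:26}) to replace $V^*(G,\bw)$ by $\max\bigl\{V^*(G\setminus\{i_k\},\bw\setminus\{i_k\}),\,w_{i_k}+V^*(G\setminus N(i_k),\bw\setminus N(i_k))\bigr\}$. Inserting and subtracting the analogous maximum formed from the approximate values $V$, the triangle inequality splits the target into two pieces: the absolute difference of the two maxima, and the local discrepancy $\bigl|\max\{\cdots\}-V(G,\bw)\bigr|$, the latter being exactly the third term on the right-hand side of~(\ref{eqn:32}).

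Next I would apply Proposition~\ref{ineq} to the difference of the two maxima, taking $A=V^*(G\setminus\{i_k\},\bw\setminus\{i_k\})$ and $B=w_{i_k}+V^*(G\setminus N(i_k),\bw\setminus N(i_k))$, with $a$ and $b$ their lowercase counterparts built from $V$. This bounds that piece by $|A-a|+|B-b|$. The term $|A-a|$ is precisely the first term of~(\ref{eqn:32}); and in $|B-b|$ the additive constant $w_{i_k}$ occurs in both $B$ and $b$ and hence cancels, leaving $\bigl|V^*(G\setminus N(i_k),\bw\setminus N(i_k))-V(G\setminus N(i_k),\bw\setminus N(i_k))\bigr|$, which is the second term.

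I expect no substantive obstacle: the recurrence~(\ref{eqn:26}) has the same two-way-maximum form as the binary-variable case, so Proposition~\ref{ineq} applies verbatim. The only point deserving explicit mention is the cancellation of the weight $w_{i_k}$ in the difference $|B-b|$, which is what collapses the pair $w_{i_k}+V^*(\cdots)$ versus $w_{i_k}+V(\cdots)$ into the clean child-error term appearing in~(\ref{eqn:32}).
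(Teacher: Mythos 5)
Your proof is correct and follows exactly the argument the paper intends: the paper states Proposition~\ref{prop:3.1ind} without an explicit proof, but it is the direct analogue of the earlier single-step bound on $\overline\Delta(k)$, proved by the same insert-and-subtract triangle inequality followed by Proposition~\ref{ineq}, with the additive weight $w_{i_k}$ cancelling just as you note. No gaps.
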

\begin{definition}\rm
A {\em weighted sub-graph} of a weighted graph $(G,\bw)$ is a pair $(H,\bw_H)$ where $H$ is a graph obtained from $G$ by dropping a subset of its nodes as well ass all the arcs incident upon at least one of the dropped nodes, and $\bw_H$ is the sub-vector of $\bw$ consisting of the entries of $\bw$ that correspond to nodes in $H$. 
We denote by $i_H$ the node of the highest index that belongs to $H$.
\end{definition}
The inequality (\ref{eqn:32}) can be applied iteratively and therefore implies the following:
\begin{prop}
The following inequality holds for every
natural $k$, a weighted graph $(G,\,\bw)$ with nodes $i_1<\cdots<i_k$, and a value-function $V$, there exist integral coefficients $\alpha_H$ such that
\[ % \label{eq:33}
\eqalign{
\big|V^*&(G,\,\bw) 
-V(G,\,\bw)\big|  
\le \sum_{H\subseteq G}\alpha_H \, \times\, \cr
& \bigg|
\max
\big\{ 
V(H\setminus\{i_H\},\,\bw_H\setminus \{i_H\}) ~,~ %\cr &~~~~~ ~~~~~~~~~~ 
w_{i_H} + 
V(H\setminus N(i_H),\, \bw_H\setminus N(i_H))
\big\}  
%\cr&~~~
 - V(H,\,\bw_H)
\bigg|~.
}
\]
\end{prop}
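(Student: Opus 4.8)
The plan is to prove the bound by strong induction on the number of nodes $k$ of $G$, treating the one-step inequality (\ref{eqn:32}) of Proposition \ref{prop:3.1ind} as the recursion step. For a weighted subgraph $(H,\bw_H)$ write
\[
R(H,\bw_H) = \Big|\max\big\{V(H\setminus\{i_H\},\,\bw_H\setminus\{i_H\})\,,\,w_{i_H} + V(H\setminus N(i_H),\,\bw_H\setminus N(i_H))\big\} - V(H,\bw_H)\Big|
\]
for the optimality-condition residual of $H$, i.e.\ the generic summand appearing on the right-hand side. The crucial structural observation is that $R(H,\bw_H)$ is \emph{intrinsic} to $(H,\bw_H)$: the highest-index node $i_H$, its neighbourhood $N(i_H)$ taken inside $H$, and the induced subgraphs $H\setminus\{i_H\}$ and $H\setminus N(i_H)$ all depend on $H$ alone and not on any ambient graph from which $H$ was obtained. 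This is precisely what will let the residual contributions arising from the two branches of the recursion be merged.

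For the base case I would take $k=0$, the empty graph, where $V^*=V=0$ by the defining condition on $V$; the left-hand side is then $0$ and the sum is empty, so all $\alpha_H$ may be taken to be $0$. For the inductive step, I first apply (\ref{eqn:32}) to peel off the top node $i_k$, which bounds $\big|V^*(G,\bw)-V(G,\bw)\big|$ by the sum of $\big|V^*(G\setminus\{i_k\},\bw\setminus\{i_k\}) - V(G\setminus\{i_k\},\bw\setminus\{i_k\})\big|$, the analogous quantity for $G\setminus N(i_k)$, and the residual $R(G,\bw)$. Since $G\setminus\{i_k\}$ and $G\setminus N(i_k)$ each omit at least the node $i_k$, both have strictly fewer than $k$ nodes, so the induction hypothesis furnishes nonnegative integral coefficients $\alpha'_H$ and $\alpha''_H$ with
\[
\big|V^*(G\setminus\{i_k\},\bw\setminus\{i_k\}) - V(G\setminus\{i_k\},\bw\setminus\{i_k\})\big| \le \sum_{H\subseteq G\setminus\{i_k\}} \alpha'_H\, R(H,\bw_H)
\]
and similarly for $G\setminus N(i_k)$ with coefficients $\alpha''_H$. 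Because every subgraph of $G\setminus\{i_k\}$ or of $G\setminus N(i_k)$ is again a subgraph of $G$ carrying the same induced structure and hence, by the observation above, the same residual $R(H,\bw_H)$, I can re-index both sums over all $H\subseteq G$ (extending each set of coefficients by $0$ outside its range) and set $\alpha_G=1$ together with $\alpha_H = \alpha'_H + \alpha''_H$ for $H\subsetneq G$. Each $\alpha_H$ is a sum of integers, hence integral, which closes the induction.

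The only real obstacle is the bookkeeping that legitimizes this merge. One must confirm that a subgraph $H$ reachable through both branches contributes the \emph{identical} expression $R(H,\bw_H)$ in each, so that adding $\alpha'_H$ and $\alpha''_H$ is meaningful; this is exactly the intrinsic-ness of $i_H$ and of the induced neighbourhood structure, which I would verify directly from the definitions of weighted subgraph and of $i_H$ (noting that dropping from $\bw_H$ any node of $N(i_H)$ not present in $H$ is vacuous, so the within-$H$ and within-$G$ readings of $N(i_H)$ agree). A minor point worth recording is that the coefficients remain nonnegative integers throughout — the top-level contribution $\alpha_G=1$ and the inductive additions never introduce fractions or signs — so the asserted integrality needs no separate argument.
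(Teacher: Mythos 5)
Your proof is correct and is essentially the paper's own argument made formal: the paper merely remarks that inequality (\ref{eqn:32}) "applied iteratively" yields the result, with $\alpha_H$ counting the occurrences of $H$ in the recursion tree, and your induction on the number of nodes (with $\alpha_G=1$ and $\alpha_H=\alpha'_H+\alpha''_H$) is exactly that counting argument. Your explicit attention to the base case (which needs $V=V^*=0$ on the empty graph, an assumption the paper leaves implicit) and to the intrinsic-ness of the residual $R(H,\bw_H)$ fills in details the paper omits, but does not change the approach.
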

The coefficient $\alpha_H$ represents the number of times the sub-graph $H$ occurs in the recursive process implied by (\ref{eqn:26})
for calculating 
$V^*(G,\,\bw)$ of the given weighted graph.
An explicit expression can be derived with the help of a vector 
$\bxi=(\xi_{k+1},\ldots,\xi_n)\in \{0,1\}^{n-k}$
as follows.
Suppose node $i$, $i>k$, has already been picked to be in the independent set if and only if $\xi_i=1$.
Denote by $V^*(G,\,\bw,\,\bxi)$
the maximum weight of
an independent set $S$ of nodes, subject to the constraint that for $i=k+1,\ldots,n$, $i\in S$ if and only if $\xi_i=1$. In this case, we have the following, already familiar, optimality equation:
\[
V^*(G,\,\bw,\,\bxi)
= \max\big\{
V^*(G,\,\bw,\,(0,\bxi)),
~ V^*(G,\,\bw,\,(1,\bxi))
\]
if node $k$ is not adjacent to any
$i$ with $\xi_i=1$;
otherwise,
\[
V^*(G,\,\bw,\,\bxi)
= 
V^*(G,\,\bw,\,(0,\bxi))~.
\]
If the distribution over weighted graphs is over a fixed graph, then the coefficients $\alpha_H$ could be determined during preprocessing.

\subsection{The Maximum-Cut Problem}
\begin{definition} \rm
The Maximum-Cut (Max-Cut) problem is defined
over a graph with $n$ nodes and a reward matrix
$\bR = (R_{ij}) \in \bbR^{n\times n}$.
It calls for finding $\bx \in \{0,1\}^n$
so as to maximize
\[ 
\sum_{i=1}^n \sum_{j=1}^n
  R_{ij} x_i (1-x_j)~. \]
\end{definition}
For $\bxi=(0,\ldots,0,\xi_{k+1},\ldots,\xi_{n})\in \{0,1\}^{n}$, 
denote by 
$V^*(k\,;\,\bR,\,\bxi)$ the maximum of the function
\[ 
% \eqalign{
f(x_1,\ldots,x_k) 
%=&\ 
%= \sum_{i=1}^k\sum_{j=i}^n 
%           R_{ij}\,x_i\, (1-x_j) 
%    + \sum_{i=1}^k 
%     \sum_{j=1}^k 
%     R_{ij} \bigg( x_i(1-\xi_j)  
%         +  \xi_j (1-x_i) \bigg) %  %\cr=&\ 
= \sum_{i=1}^k\sum_{j=1}^k 
           R_{ij}\,x_i\, (1-x_j)
           +
\sum_{i=1}^k \bigg(
(1-x_i)
\sum_{j=k+1 \atop \xi_j=1}^n  
      R_{ij}
+x_i\sum_{j=k+1 \atop \xi_j=0}^n
      R_{ij} 
 \bigg)~.%} 
 \]
Denote by $\be^k$ the unit $n$-vector with $1$ in the $k$th position.
The optimality equation is the following:
\[
V^*(k;\bR,\bxi) 
= 
\max\bigg\{ 
\sum_{j=k+1 \atop \xi_j=1}^n R_{kj}
+
 V^*(k-1; \bR,\bxi)
~,~  
\sum_{j=k+1 \atop \xi_j=0}^n R_{kj}
+
V^*(k-1;\bR, \bxi + \be^k )
\bigg\}~.
\]

%  \section{\hskip -16pt. Computational results}
%\[ ......\]

%  \bibliographystyle{abbrv}
%  \bibliography{arXiv.MWDS}

\end{document}